\newtheorem{theorem}{Theorem}[section]
\newtheorem{hypothesis}{Hypothesis}[section]
\theoremstyle{definition}
\newtheorem{definition}{Definition}[section]
\newcommand{\eg}{\emph{e.g.,}\xspace}
\newcommand{\ie}{\emph{i.e.,}\xspace}
\newcommand{\myref}[1]{Eq.\ref{#1}}
\title{Towards Higher Pareto Frontier in Multilingual Machine Translation}
\author{Yichong Huang$^{\dag}$, Xiaocheng Feng$^{\dag \ddag}$, Xinwei Geng$^{\dag}$, Baohang Li$^{\dag}$, Bing Qin$^{\dag \ddag}$\\
  $^{\dag}$Harbin Institute of Technology\quad \quad \quad $^\ddag$ Peng Cheng Laboratory\\
  \texttt{\{ychuang,xcfeng,xwgeng,baohangli,qinb\}@ir.hit.edu.cn} \\}
\begin{document}
\maketitle

\begin{abstract}
Multilingual neural machine translation has witnessed remarkable progress in recent years. 
However, the long-tailed distribution of multilingual corpora poses a challenge of Pareto optimization, \ie optimizing for some languages may come at the cost of degrading the performance of others.
Existing balancing training strategies are equivalent to a series of Pareto optimal solutions, which trade off on a Pareto frontier\footnote{In Pareto optimization, Pareto optimal solutions refer to solutions in which none of the objectives can be improved without sacrificing at least one of the other objectives. The set of all Pareto optimal solutions forms a Pareto frontier.}.
In this work, we propose a new training framework, Pareto Mutual Distillation (Pareto-MD), towards pushing the Pareto frontier outwards rather than making trade-offs.
Specifically, Pareto-MD collaboratively trains two Pareto optimal solutions that favor different languages and allows them to learn from the strengths of each other via knowledge distillation.
Furthermore, we introduce a novel strategy to enable stronger communication between Pareto optimal solutions and broaden the applicability of our approach. 
Experimental results on the widely-used WMT and TED datasets show that our method significantly pushes the Pareto frontier and outperforms baselines by up to +2.46 BLEU\footnote{Our code is publicly available at \url{https://github.com/OrangeInSouth/Pareto-Mutual-Distillation}}. 
\end{abstract}

\section{Introduction}
Multilingual neural machine translation (MNMT) is a popular paradigm that uses a unified model to handle the entire translation process for multiple language pairs~\cite{ha-etal-2016-toward,firat-etal-2016-multi,johnson-etal-2017-googles}.
This paradigm is particularly effective at improving the performance of low-resource languages through transfer learning~\cite{aharoni-etal-2019-massively,dabre2020survey,siddhant2022towards}. 
Besides, MNMT is highly deployable since only one model is required~\cite{fan2021beyond,yang-etal-2021-multilingual-machine,https://doi.org/10.48550/arxiv.2207.04672}.

\begin{figure}[ht]
  \centering
    \includegraphics[clip,width=1.0\columnwidth,]{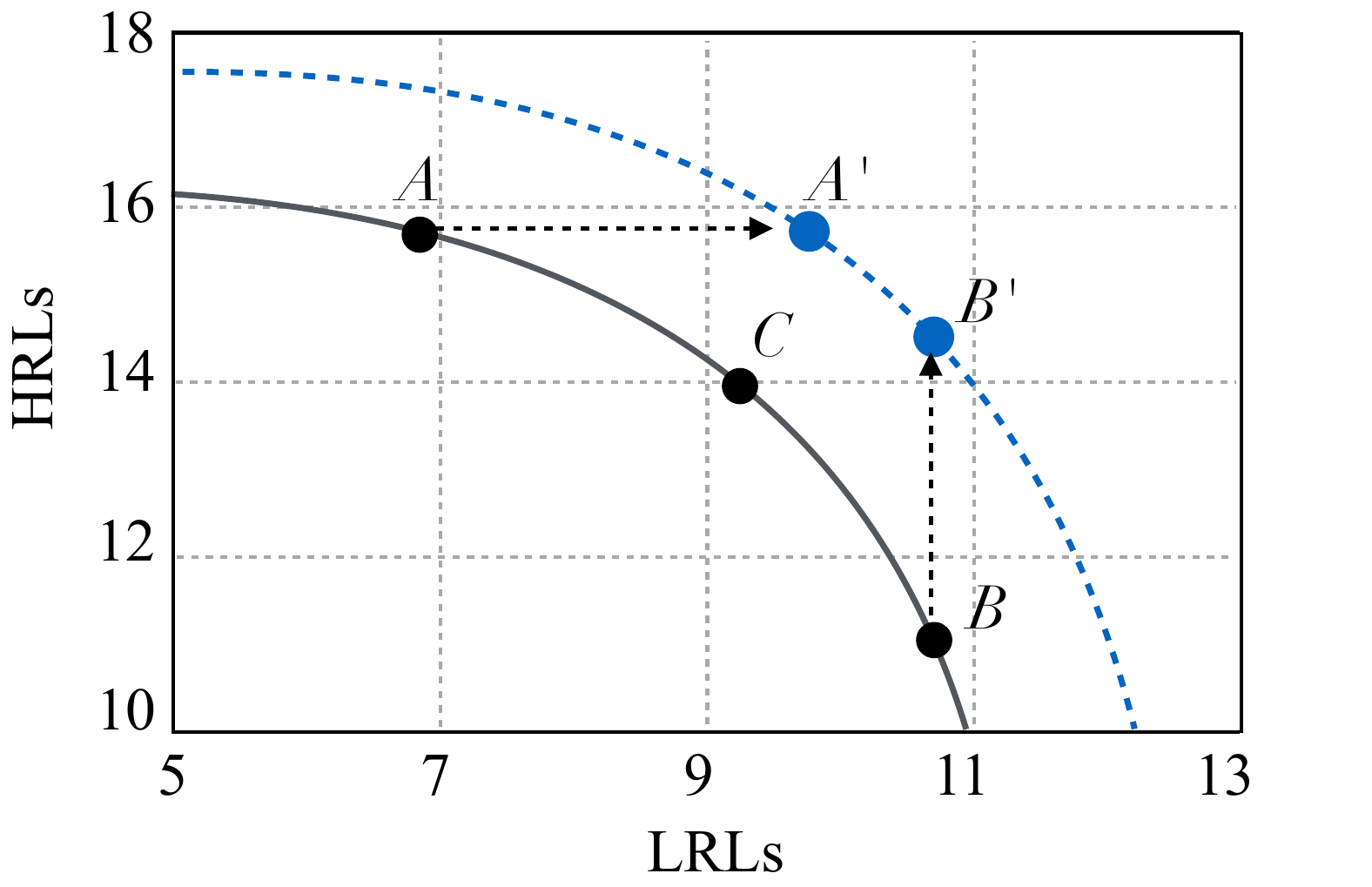}
    \caption{Multilingual performance frontier shifts outwards. X-axis and Y-axis indicate the performance of Low-Resource Languages and High-Resource Languages, respectively. Existing methods reflect a trade-off on the Pareto frontier (\ie the gray curve). Our work aims to \textit{push the original Pareto frontier} \ie the blue dotted curve. To this effect, we ameliorate each individual model's shortcoming while retaining their strengths, \eg moving right the solution $A$ to $A'$ and moving up the solution $B$ to $B'$, via our Pareto Mutual Distillation.}
  \label{fig:motivation}
\end{figure}

However, the severely imbalanced distribution of multilingual training data puts the MNMT in a situation of Pareto optimization (also known as multi-objective optimization). That is, when some languages are optimized, others degenerate.
Existing methods can be considered a set of Pareto optimal solutions that trade off on a Pareto frontier, which focus on balancing the performance across different languages by adjusting the sampling distribution ~\citep{arivazhagan-etal-2019-massively,wang-etal-2020-balancing,wu-etal-2021-uncertainty}.
The widely-used temperature-based sampling~\citep{arivazhagan-etal-2019-massively} is typical evidence of the claim above, which uses a hyper-parameter to smooth the training distribution over all language pairs to enhance the representation of low-source Languages (LRLs) while sacrificing the which of High-Resource Languages (HRLs).
Despite the emergence of several sophisticated dynamic sampling technologies designed to overcome the inflexibility of temperature-based sampling, their performance remains restricted to this Pareto frontier~\citep{wang-etal-2020-balancing,zhou-etal-2021-distributionally,zhang-etal-2021-competence-based}.

In this work, we propose a novel training framework, named Pareto Mutual Distillation (Pareto-MD), to push the Pareto frontier of multilingual models.
Specifically, Pareto-MD uses different training distributions that favor dissimilar subsets of languages to train two multilingual models simultaneously.
These two models learn from each other at each training step with knowledge distillation.
The underlying idea of Pareto-MD is to address shortcomings of individual Pareto optimal solutions via access to a better one in terms of that shortcoming, thereby raising the Pareto frontier, as Fig.~\ref{fig:motivation} depicts.
To fully exploit the potential of our approach in multilingual settings, we further propose Automatic Pareto Mutual Distillation, which dynamically determines the contribution of distillation learning loss on each objective. 
These contributions, controlled by a set of distillation weights, adapt automatically to the evolving models, eliminating the need for manual hyper-parameter search.

While our method applies essentially to any multi-objective optimization problem, we specifically demonstrate its benefit on multilingual machine translation.
The experimental results on two widely-used datasets demonstrate the effectiveness of our method, which improves up to +2.46 BLEU, and the further analysis shows the Pareto frontier is pushed outwards visibly.

\section{Preliminaries}
Neural machine translation (NMT) is a classic NLP task that translates a sentence $x$ in source language into a sentence $y$ in target language~\citep{kalchbrenner-blunsom-2013-recurrent,sutskever-etal-2014-sequence,bahdanau-etal-2015-neural,vaswani-etal-2017-attention}.
Given a parallel corpus ${D} = \{(x,y) \in \mathcal{X} \times \mathcal{Y}\}$, the NMT model is commonly trained by minimizing the negative log-likelihood loss:
\begin{align}
\label{eq:NLL_loss}
\mathcal{L}_{ce} =  \sum_{(x, y) \, \sim D} \sum\limits_{i \le |y|} \ - \log p(y_i | x,y_{<i}; \theta),
\end{align}
 where $p(\cdot|\cdot; \theta)$ maps the source sentence and previous generated text to the next target token.

\subsection{Multilingual Machine Translation}
Given a set of language pairs $L$, the MNMT model is trained on the combination of $|L|$ parallel datasets: $\{D^{train}_{\ell} \}_{\ell=1}^{|L|}$, where $D^{train}_\ell$ is the dataset of language pair $(S_\ell,T_\ell)$. 
In order to encode and decode the text in various languages into and from a universal semantic space, a large multilingual vocabulary $\mathcal{V}$ is constructed.
The language tag is appended to the beginning of source sentences as a hint of the target language.
The MNMT model is also trained with the loss function as \myref{eq:NLL_loss} over the multilingual datasets.

\paragraph{Temperature-based Sampling.}
The multilingual datasets form a distribution $P$, where $P(\ell) = \frac{N_\ell}{\sum_j N_j}$ is the sampling probability of language pair $\ell$ and we denote the dataset size of $D^{train}_\ell$ by $N_\ell$.
Since sampling probabilities of LRLs are substantially lower than those of HRLs, the optimization towards LRLs can be overwhelmed by those of HRLs.
To resolve this issue, \citet{arivazhagan-etal-2019-massively} propose temperature-based sampling, introducing a hyper-parameter $\tau$ to re-scale the smoothness of training distribution.
Concretely, the sampling probability of each language pair $\ell$ is set to:
\begin{equation}
\label{eq:temperate-sampling}
P(\ell) = \frac
    {
        N_\ell^{1/\tau}
    }
    {
        \sum_j N_j^{1/\tau}
    },    
\end{equation}
increasing the value of $\tau$ produces smoother training distributions and stronger preferences on LRLs.

\subsection{Mutual Distillation}
Knowledge Distillation (KD) is a popular technology for knowledge transfer, which originates from compressing a static high-capacity model (teacher model) into a small compact model (student model)~\citep{hinton2015distilling}.
Mutual distillation is a variant of KD~\citep{zhang2018deep,guo2020online}. 
Instead of using a pre-trained teacher model, \textbf{mutual distillation involves training more than one model simultaneously}, with each model teaching the other throughout the training process.
Mutual distillation takes the same loss function as vanilla knowledge distillation, that is:
\begin{equation}
\begin{aligned}
    \mathcal{L}_{kd} =
    \sum_{i \le |y|} \sum_{w \in \mathcal{V}}
    -& \, p(w | x, y_{<i}; \theta^T) \\
    \cdot \log \, &p(w | x, y_{<i}; \theta^S),
\end{aligned}
\end{equation}
where $\mathcal{V}$ is the target-side vocabulary, $\theta^S$ and $\theta^T$ are the student model and teacher model.
\textit{The major difference of Pareto-MD from vanilla mutual distillation is that we train two models with different sampling distributions to make them favor different sets of objectives.}
\section{Pareto Mutual Distillation}
\begin{figure}[t]
  \centering
    \includegraphics[clip,width=1.0\columnwidth,]{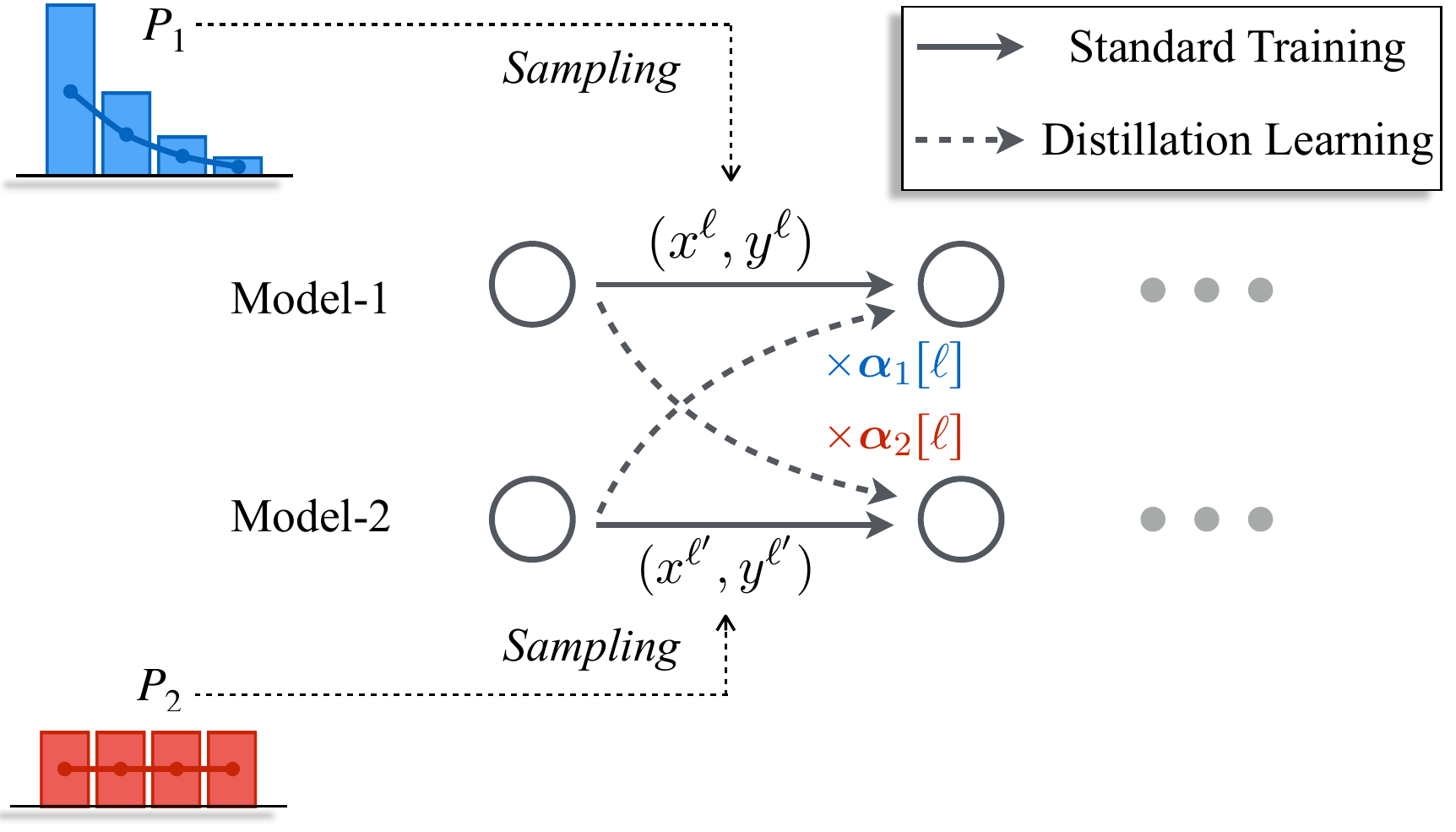}
    \caption{Illustration of Pareto-MD, using different sampling distributions to train two models. 
    At each step, both models additionally mimic the output of each other via knowledge distillation. 
    The distillation learning of each model is weighted by language-specific distillation weights $\boldsymbol{\alpha}_i[\ell]$ deduced with specific strategies.}
    \label{fig:Pareto-MD}
\end{figure}
In this section, we first introduce our training framework Pareto-MD (\S\ref{subsection:method framework}). 
Next, two strategies that determine the important distillation weights, \textsc{Uni}-PMD and \textsc{Bi}-PMD, are shown (\S\ref{subsection:UniMD_BiMD}).
To overcome the flaws of these two strategies above, \textsc{Auto}-PMD is further proposed (\S\ref{subsection:auto-momd}).

\subsection{Framework}
\label{subsection:method framework}
We illustrate our Pareto-MD in Fig.~\ref{fig:Pareto-MD}.
Pareto-MD simultaneously trains two models, denoted by $\theta_1$ and $\theta_2$, using different sampling distributions, $P_1$ and $P_2$, that make each model favor a different set of language pairs. 
To obtain expected distributions, we adopt temperature-based sampling, as shown in ~\myref{eq:temperate-sampling}, and set $\tau=1$ for $P_1$, $\tau>1$ (\eg $\tau=5$ commonly) for $P_2$.
In this way, $\theta_1$ prefers HRLs, and $\theta_2$ prefers LRLs.

At each training step, for each model $\theta_i$ where $i \in \{1, 2\}$, Pareto-MD first draws a language pair $\ell$ from training distribution $P_i$, then a mini-batch of sentence pairs $B_{\ell} = \{x_{\ell}, y_{\ell} \}$ are sampled from $D^{train}_{\ell}$.
Next, the model $\theta_i$ is trained to fit $B_{\ell}$ and match the output of the other model, \ie $\theta_{3-i}$.
The overall loss function for model $\theta_i$ is defined as:
\begin{equation}
\label{eq:Pareto-MD_Loss}
\begin{aligned}
\mathcal{L}_{PMD} = 
(1 - \boldsymbol{\alpha}_i[\ell]) \,\times\, &\mathcal{L}_{ce}(B_l; \theta_i)  \\
+\  \boldsymbol{\alpha}_i[\ell] \ \ \times\, &\mathcal{L}_{kd}(B_{\ell}; \theta_i, \theta_{3-i}),
\end{aligned}     
\end{equation}
where $\boldsymbol{\alpha}_i \in \mathbb{R}^{|L|}$ is the multilingual distillation weight vector of $\theta_i$ and $\boldsymbol{\alpha}_i[\ell] \in [0, 1]$ is the distillation weight for language pair $\ell$. 
$\boldsymbol{\alpha}_i[\ell]$ is crucial as controlling the extent how much $\theta_i$ should learn from $\theta_{3-i}$ in direction $\ell$. 
When $\boldsymbol{\alpha}_i[\ell] = 0$, $\theta_i$ does not acquire information from $\theta_{3-i}$ in $\ell$.
The values of $\boldsymbol{\alpha}_i$ are determined by the specific strategy.
We summarize the whole training framework in Alg.\ref{alg:Pareto-MD}.

\setlength{\textfloatsep}{10pt}
\begin{algorithm}[t]

\DontPrintSemicolon
\SetInd{0.5em}{0.7em} 
\SetKwInput{KwIn}{Input\quad\ }
\SetKwInput{KwInit}{Initialize}
\small
\caption{Pareto-MD}\label{alg:Pareto-MD}
\KwIn{Datasets $\{D^{train}_{\ell} \}_{\ell=1}^{|L|}$, two training distributions $P_1, P_2$, learning rate $\eta$, distillation weights updating strategy $\mathcal{S}$, updating interval $\mathcal{T}$} 
    
 \KwInit{Randomly initialize model $\theta_1$ and $\theta_2$, set multilingual distillation weights $\boldsymbol{\alpha}_1,\boldsymbol{\alpha}_2 = \boldsymbol{0}$, training step $t = 0$ }
    
\While{not converged}{
    $t \gets t+1$ \;
    \For{$i \gets 1 \ to\  2$}{
        Sample a language pair $\ell$ from $P_i$ \;
        Draw a batch of samples $B_{\ell}$ from $D^{train}_{\ell}$ \;
        $\theta_i \gets \theta_i - \eta \nabla_{\theta_i}\mathcal{L}_{PMD}(B_{\ell};\theta_i, \theta_{3-i}, \boldsymbol{\alpha}_i[\ell])$
    }

    \If{$t \ \% \ \mathcal{T} = 0$}{
        Update $\boldsymbol{\alpha}_1, \boldsymbol{\alpha}_2$ with the specific strategy $\mathcal{S}$
    }
}

\end{algorithm}

\subsection{\textsc{Uni}-PMD and \textsc{Bi}-PMD}
\label{subsection:UniMD_BiMD}
Multilingual distillation weights $\boldsymbol{\alpha}_i$ play important roles in Pareto-MD. We present two strategies, unidirectional Pareto mutual distillation (\textsc{Uni}-PMD) and bidirectional Pareto mutual distillation (\textsc{Bi}-PMD), for determining the values of $\boldsymbol{\alpha}_i$ based on different design philosophies.

\begin{figure*}[t]
  \centering
    \includegraphics[clip,width=2.0\columnwidth,]{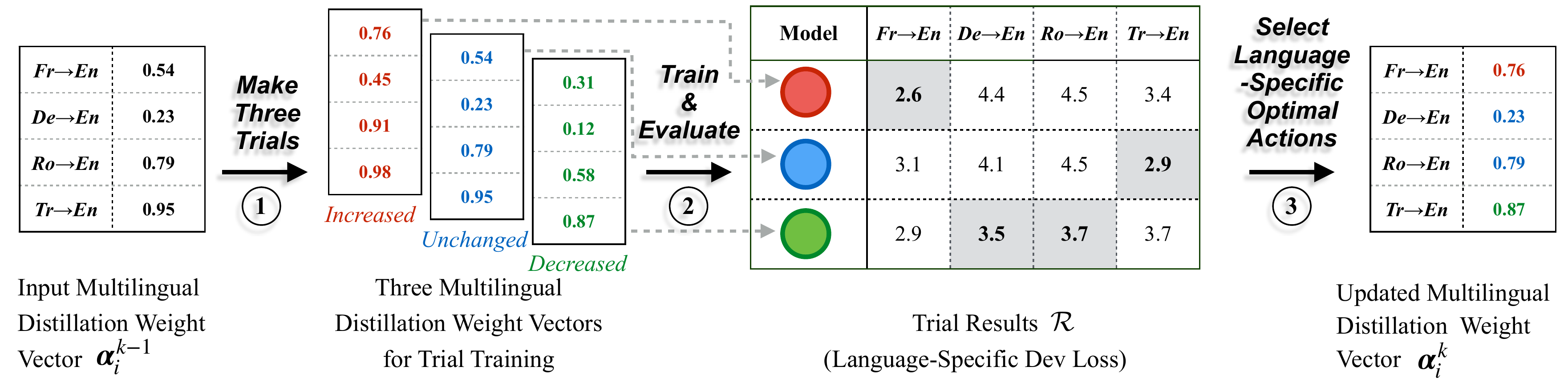}
    \caption{Process of \textsc{Auto}-PMD updating the distillation weights.
    At the $k$-th update, \textsc{Auto}-PMD makes three trials that perform three actions to all language pairs' weights and then train the current model.
    Finally, the language-specific optimal actions are selected to update the previous weights.
    Note that the value of each weight will change by different magnitudes when increased or decreased due to the non-linear nature of sigmoid function.}
    \label{fig:Auto-PMD}
\end{figure*}
\paragraph{\textsc{Uni}-PMD.} 
\textsc{Uni}-PMD is designed based on the intuition that \textit{each model should only learn from the strengths and avoid mimicking the shortcomings of the other model}.
Therefore, in each translation direction $\ell$, \textsc{Uni}-PMD lets the model that performs less well, denoted by $\theta_{\ell}^{worse}$, be distilled by the model that performs better in this direction, denoted by $\theta_{\ell}^{better}$, via setting a positive distillation weight.
Conversely, \textsc{Uni}-PMD zeros the weight to forbid $\theta_{\ell}^{better}$ from being influenced by $\theta_{\ell}^{worse}$.

Formally, given multilingual validation datasets $\{D^{valid}_{\ell}\}_{\ell=1}^{|L|}$ and a pre-defined hyper-parameter $\alpha \in [0, 1]$, in each direction $\ell \in L$, \textsc{Uni}-PMD sets the distillation weight of $\theta_i$ as:
\begin{equation}
\label{eq:uni-momd}
\begin{aligned}
\boldsymbol{\alpha}_i[\ell] = \alpha \times \mathbbm{1}\{ i = \mathop{\arg\max}\limits_{j \in \{1, 2\}} \mathcal{L}_{ce}(D^{valid}_l; \theta_j) \},
\end{aligned}
\end{equation}
where the $\mathbbm{1}\{\cdot\}$ is an indicator function, indicating whether the model $\theta_i$ performs less well on the translation of $\ell$. \textsc{Uni}-PMD updates the distillation weights every $\mathcal{T}$ steps.

\paragraph{\textsc{Bi}-PMD.} Besides, we design another strategy \textsc{Bi}-PMD based on the hypothesis that among the two models that are trained with Pareto-MD, in each translation direction $\ell$, $\theta_{\ell}^{worse}$ is also possible to improve $\theta_{\ell}^{better}$ via knowledge distillation.
This hypothesis is motivated by the recently proposed theoretical framework of \textit{Multi-View Data}~\citep{https://doi.org/10.48550/arxiv.2012.09816,he2021feature}, which theoretically reveals that each well-trained network only captures a different subset of relevant features, limiting their generalization. 
The mechanism of knowledge distillation is to help one model to learn the relevant features of another model.

The discovery motivates us to suspect that $\theta_{\ell}^{worse}$ can also improve $\theta_{\ell}^{better}$ using distillation, as $\theta_{\ell}^{worse}$ may possess relevant features that $\theta_{\ell}^{better}$ lacks.
Therefore, \textsc{Bi}-PMD allows $\theta_{\ell}^{worse}$ to affect $\theta_{\ell}^{better}$ in direction $\ell$.
Our implementation is simple: \textsc{Bi}-PMD sets all distillation weights to a positive value. Formally, given a hyper-parameter $\alpha$, the distillation weight of $\theta_i$ in direction $\ell$ is:
\begin{equation}
\label{eq:bi-momd}
\begin{aligned}
\boldsymbol{\alpha}_i[\ell] = \alpha,
\end{aligned}
\end{equation}
meaning that each model affects the other equally.

\subsection{\textsc{Auto}-PMD} \label{subsection:auto-momd}
\paragraph{Desiderata.} Both \textsc{Uni}-PMD and \textsc{Bi}-PMD determine the distillation weights of all translation directions based on a pre-defined hyper-parameter $\alpha$, which dissatisfies the following three expected properties of distillation weights: 1)~\textbf{Language-Adaptability}: the optimal distillation weights for different language pairs vary. However, the current strategies set a uniform weight for all language pairs, resulting in sub-optimal performance; 2)~\textbf{Dynamics}: existing research on mutual distillation uses a fixed distillation weight throughout the training process, which fails to adapt to the evolving models; 3)~\textbf{Generality}: it is empirically discovered that the optimal value of distillation weight varies across different datasets, incurring the extra cost of the manual hyper-parameter search.
To satisfy these three properties, we propose Automatic Pareto Mutual Distillation (\textsc{Auto}-PMD) to automatically decide the value of each direction’s distillation weight according to training dynamics.

\paragraph{Approach.} \textsc{Auto}-PMD updates multilingual distillation weight vector $\boldsymbol{\alpha}_i$ every $\mathcal{T}$ steps. 
We denote the values of $\boldsymbol{\alpha}_i$ after the $k$-th update by $\boldsymbol{\alpha}^k$.
Note that the subscript $i$ of $\boldsymbol{\alpha}_i$ is omitted for clarity.
The update process is modeled as Markov Chain~\citep{norris1998markov}. 
All distillation weights are initialized at the beginning of training as a small value, \ie $\boldsymbol{\alpha}^0[\ell]=0.1$.  
Three actions on distillation weight are defined:
\begin{equation}
\begin{aligned}\label{eq: actions set}
\mathcal{F} = \{ f_{\uparrow}(\cdot), f_{\downarrow}(\cdot), f_{=}(\cdot) \},
\end{aligned}
\end{equation}
which aim to increase, decrease and keep the value of distillation weight unchanged. 
At the $k$-th update, \textsc{Auto}-PMD decides the values of $\boldsymbol{\alpha}^{k}$ according to the previous state $\boldsymbol{\alpha}^{k-1}$.
We exemplify the process of each update step in Fig.~\ref{fig:Auto-PMD} and precisely describe it in Alg.~\ref{alg:Auto-PMD}. As illustrated in Fig.~\ref{fig:Auto-PMD}, the update process is divided into three steps.

In the first step, given the previous distillation weights $\boldsymbol{\alpha}^{k-1}$, \textsc{Auto}-PMD makes three trials, generating three multilingual distillation weight vectors for the trial training of the next step. Each vector is obtained by performing an action (\eg increasing) on all values of $\boldsymbol{\alpha}^{k-1}$. These three vectors, corresponding to three colorful vectors in Fig.~\ref{fig:Auto-PMD}, form a set which is referred to as search space $\widetilde{O}^k$. In fact, the trial training of next step should be conducted over the entire search space $O^k$, which is the Cartesian product of possible subsequent states of each language-specific distillation weight $\boldsymbol{\alpha}^{k-1}[\ell]$:
\begin{equation}
\label{eq:complete search space}
O^k = \mathop{\bigtimes}\limits_{\ell \in L} \{ f(\boldsymbol{\alpha}^{k-1}[\ell]) \,|\, f \in \mathcal{F} \}.
\end{equation}
However, this search space grows exponentially as the number of languages increases, that is, $|O^k| = |\mathcal{F}|^{|L|}$. 
To overcome the non-trivial cost, the sub-space $\widetilde{O}^{k}$ is adopted.
Furthermore, we prove that based on the \textit{Distillation Weights Independence} assumption, the optimal solution searched in $\widetilde{O}^k$ is equivalent to that of $O^k$.
The mathematical description of this assumption and the proof are demonstrated in~\S\ref{section:proof of equivalence}.

\begin{algorithm}[t]
\small
\caption{\textsc{Auto}-PMD}\label{alg:Auto-PMD}
\DontPrintSemicolon
\SetInd{0.5em}{1em} 
\SetKwInput{KwIn}{Input\quad\ }
\SetKwInput{KwOut}{Output\ \ }
\SetKwInput{KwInit}{Initialize}
\KwIn{Multilingual trial datasets $\{D^{trial}_\ell\}_{\ell=1}^{|L|}$, validation datasets $\{D^{valid}_\ell\}_{\ell=1}^{|L|}$, the training model $\theta_1$and $\theta_2$, search space $\widetilde{O}_1^k$, $\widetilde{O}_2^k$, 
    distillation weights $\boldsymbol{\alpha}^{k-1}_1, \boldsymbol{\alpha}^{k-1}_2$ } 
 \KwOut{$\boldsymbol{\alpha}^k_1,\boldsymbol{\alpha}^k_2$}
 \KwInit{Initialize trial results $\mathcal{R} \in \mathbb{R}^{|L| \times |\widetilde{O}_i^k|}$ to a zero matrix}
\For{$i \gets 1 \  to \  2$}{
    \For{$j \gets   1 \ to \ |\widetilde{O}_i^k| $}{
        $\boldsymbol{\alpha}'_i \gets \widetilde{O}_i^k[j]$ \;
        Copy model $\theta'_i \gets \theta_i$ \;
        Train $\theta'_i$ on $D^{trial}$ for one epoch using teacher model $\theta_{3-i}$ and  $\boldsymbol{\alpha}'_i$ with~\myref{eq:Pareto-MD_Loss} \;
        \For{$\ell \gets 1 \ to\  |L|$}{
            $\mathcal{R}[\ell][j] \gets \mathcal{L}_{ce}(D^{valid}_{\ell}; \theta'_i)$
        }
    }

    \For{$\ell \gets 1 \ to\  |L|$}{
        $\hat{j}  \gets \mathop{\arg\min}\limits_{j} \mathcal{R}[\ell][j]$ \;
        $\boldsymbol{\alpha}^k_i[\ell] \gets \widetilde{O}_i^k[\hat{j}][\ell]$
    }
}
\end{algorithm}
Next, \textsc{Auto}-PMD uses each distillation weight vector in $\widetilde{O}^k$ to train the current model on trial set $D^{trial}$, which is constructed by sampling $\rho$ of $D^{train}$, for one epoch. 
 The three trained models are evaluated on the validation set, and the language-specific dev losses of these models form a matrix, which is represented by trial results $\mathcal{R} \in \mathbb{R}^{|\widetilde{O}^k|\times |L|}$. 
 The model training of this step incurs overhead, which is proportional to the value of $\rho \times |\widetilde{O}^k|$. In this work, we set $\rho=0.1$. Thereby, the extra overhead is 30\% of the actual model training.

Finally, the language-specific optimal actions are selected according to the trial results and then performed on $\boldsymbol{\alpha}^{k-1}[\ell]$, obtaining the results of $\boldsymbol{\alpha}^{k}[\ell]$. We exemplify this step with Fig.~\ref{fig:Auto-PMD}. The red model, trained using the increased version of $\boldsymbol{\alpha}^{k-1}$ (the vector in red), achieves the best performance of \textit{Fr$\rightarrow$En}. Thus, the $\boldsymbol{\alpha}^{k}[\ell]$ of \textit{Fr$\rightarrow$En} is obtained by increasing the $\boldsymbol{\alpha}^{k-1}[\ell]$ of \textit{Fr$\rightarrow$En}.

\paragraph{Implementation of Actions.} As aforementioned, three actions for updating distillation weights are defined (in \myref{eq: actions set}). The $f_{=}(\cdot)$ is simple:
\begin{equation}
    f_{=}(\boldsymbol{\alpha}[\ell]) = \boldsymbol{\alpha}[\ell].
\end{equation}
For $f_{\uparrow}(\cdot)$ and $f_{\downarrow}(\cdot)$, it must ensure that the output is always between $[0, 1]$. 
Therefore, the input is first mapped into $(-\infty, +\infty)$ using the inverse of sigmoid function and then increased/decreased the value by $\mu$, named step size.
Finally, the increased/decreased value is mapped back into $[0, 1]$ using sigmoid function. 
Formally:
\begin{equation}
f_{\uparrow}(\boldsymbol{\alpha}[\ell]) = \sigma(\sigma^{-1}(\boldsymbol{\alpha}[\ell]) + \mu) 
\end{equation}
\begin{equation}
f_{\downarrow}(\boldsymbol{\alpha}[\ell]) = \sigma(\sigma^{-1}(\boldsymbol{\alpha}[\ell]) - \mu)
\end{equation}
where $\sigma(\cdot)$ is sigmoid function. The step size $\mu$ is crucial for weights search. A smaller step size could improve the precision of searched weights while may delay convergence to the optimal weight.
Therefore, we design a \textbf{step size scheduler}, setting a large step size in the early training stage and then deducing the step size:
\begin{equation}
\label{eq:step size decay}
\mu = \sqrt{\frac{\mathcal{T}_{max} - t}{\mathcal{T}_{max}}}
\end{equation}
where $\mathcal{T}_{max}$ is the max training steps.

\begin{table*}[!t]
\small
\renewcommand\tabcolsep{4.0pt}
\renewcommand\arraystretch{1.15}
\centering
\begin{tabular}{lccccc}

\toprule
\multirow{2}{*}{\textbf{Method}} & \multirow{2}{*}{\makecell[c]{\textbf{Sampling}}} & \multicolumn{2}{c}{\textbf{\textsc{WMT-6}}} & \multicolumn{2}{c}{\textbf{\textsc{TED-8-Diverse}}} \\
\cmidrule(lr){3-4}
\cmidrule(lr){5-6}
 & & \textbf{Many-to-One} & \textbf{One-to-Many} & \textbf{Many-to-One} & \textbf{One-to-Many} \\

\midrule
\multicolumn{6}{c}{\textbf{Existing Balancing Training Strategies}} \\
\textsc{Temperature Sampling} & $\tau=1$ & 20.57 & 18.92 & 29.00 & 22.75 \\
 \textsc{Temperature Sampling} & $\tau>1$ & 19.93 & 18.63 & 28.35 & 22.23 \\
 \textsc{MultiDDS-S}~\cite{wang-etal-2020-balancing}$^*$ & \textit{dyn.} & -- & -- & 27.00 & 18.24  \\
\textsc{MultiUAT}~\cite{wu-etal-2021-uncertainty}$^*$ & \textit{dyn.} & -- & -- & 27.83 & 19.76  \\
CCL-M~\cite{zhang-etal-2021-competence-based}$^*$ & \textit{dyn.} & -- & -- & 28.34 & 19.53 \\
 $\chi$-IBR~\cite{zhou-etal-2021-distributionally}$^*$ & \textit{dyn.}
& -- & -- & 29.74 & 23.44  \\
 
\midrule
\multicolumn{6}{c}{\textbf{Existing Knowledge Distillation-based Strategies}} \\

\textsc{Multi-Distill}~\cite{tan-etal-2018-multilingual} & $\tau=1$ & 20.18 & 18.57 & 29.52 & 22.31   \\
 LSSD~\cite{huang-etal-2022-unifying} & $\tau=1$ & 21.17 & 19.76 & 30.77 & \textbf{23.55} \\
 
\midrule
\multicolumn{6}{c}{\textbf{Our Pareto Mutual Distillation}} \\

 \multirow{2}{*}{\textsc{Uni}-PMD} & $\tau=1$  &  20.76$^\dagger$  & 18.96  &  29.76$^\dagger$  & 22.92  \\
  & $\tau>1$  &  21.74$^\dagger$  &  19.76$^\dagger$ &  29.97$^\dagger$   &  22.91   \\ \cdashline{1-6}

 \multirow{2}{*}{\textsc{Bi}-PMD} & $\tau=1$  &   21.61$^\dagger$ &  19.53$^\dagger$  &  30.31$^\dagger$ &  23.00$^\dagger$ \\
  & $\tau>1$   &  21.92$^\dagger$ &  20.09$^\dagger$ &  30.42$^\dagger$   &  22.77  \\ \cdashline{1-6}

 \multirow{2}{*}{\textsc{Auto}-PMD} & $\tau=1$  &  21.89$^\dagger$   &  20.16$^\dagger$  & \textbf{31.05}$^\dagger$ &  23.31$^\dagger$  \\
 & $\tau>1$  &  \textbf{22.39}$^\dagger$  &   \textbf{20.48}$^\dagger$  & 30.71$^\dagger$  &  23.28$^\dagger$   \\
\bottomrule
\end{tabular}
\caption{BLEU scores on the WMT-6 and TED-8-Diverse dataset.
Bold indicates the highest BLEU score in each setting. 
`*' means results taken from the original paper.
`$\dagger$' indicates significantly better than temperature-based sampling with t-test $p<0.001$.
The temperature-based sampling is tried with $\tau=\{1, 5\}$ on WMT-6 and $\tau=\{1, 3\}$ on TED-8-Diverse.
For each of our approaches, the first row is the result of model-1, and the second row is the result of model-2.
`\textit{dyn.}' is the abbreviation for ``dynamic sampling.''
 }
\label{tab:main results}
\end{table*}
\section{Experiments}
\subsection{Settings}

\paragraph{Datasets.}
We conduct experiments on two datasets: the \text{WMT-6} dataset provided by~\citet{huang-etal-2022-unifying} and the widely-used TED-8-Diverse dataset constructed by~\citet{wang-etal-2020-balancing}. 
The WMT-6 dataset involves the language pairs of 3 LRLs (\textit{et}, \textit{ro}, \textit{tr}) and 3 HRLs (\textit{fr}, \textit{de}, \textit{zh}) to English.
This dataset has around 5M training sentences from parallel corpora that WMT provides over multiple years, and the corresponding validation and test sets are used. The data statistics are detailed in Appendix~\ref{section:data statistics}.
The TED-8-Diverse contains the language pairs of 4 LRLs (\textit{bos}, \textit{mar}, \textit{hin}, \textit{mkd}) and 4 HRLs (\textit{ell}, \textit{bul}, \textit{fra}, \textit{kor}) to English. 
This dataset comprises around 570K sentence pairs.
The data statistics and the interpretation of language codes are demonstrated in Appendix~\ref{section:data statistics}.
Compared to TED-8-Diverse, the size of WMT-6 dataset is more considerable and distributed more unevenly.

For each dataset, our approach is evaluated in two multilingual translation scenarios: 1) \textsc{Many-to-One}~(M2O): translating multiple languages to English in this work; 2) \textsc{One-to-Many}~(O2M): translating English to other languages. 

\paragraph{Hyper-parameters.}
Even though our proposed training framework can be applied to any model architecture, we verify its effectiveness on the popular Transformer~\cite{vaswani-etal-2017-attention} implemented in fairseq~\cite{ott-etal-2019-fairseq} with the base version. We use the same model configuration, hyper-parameters, and preprocess procedure as those of~\citet{huang-etal-2022-unifying} for all baselines and our method. 
The only difference is that the dropout rate is modified into $0.2$ on WMT-6, to accelerate the convergence without performance loss.
The complete set of hyper-parameters is demonstrated in Appendix~\ref{subsection:hyper-parameters}.
The performance is evaluated with the BLEU score~\cite{papineni-etal-2002-bleu} using the SacreBLEU toolkit~\cite{post-2018-call}.

As illustrated in \S\ref{subsection:method framework}, our Pareto-MD trains two models using different sampling distributions, $P_1$ and $P_2$, and we adopt temperature-based sampling with different values of $\tau$ to produce these two distributions.
We set $\tau=1$ for $P_1$ and $\tau=5$ for $P_2$ on WMT-6. On TED-8-Diverse, we set $\tau=1$ for model-1 and $\tau=3$ for model-2 since an overly large value leads to poor performance.
For the \textsc{Uni}-PMD and \textsc{Bi}-PMD, we manually search the optimal $\alpha$ (in \myref{eq:uni-momd} and \myref{eq:bi-momd}) among $\{ 0.2, 0.4, 0.6, 0.8\}$.
The update interval of distillation weights $\mathcal{T}$ is set to the step number of one epoch.

\paragraph{Baselines.}
We primarily compare our Pareto-MD with: 
(1)~Temperature-based Sampling: the method most related to our work; 
2)~$\chi$-IBR~\cite{zhou-etal-2021-distributionally}, the state-of-the-art (SOTA) dynamic sampling method, which enables the balancing training based on \textit{distributionally robust optimization};
3)~LSSD~\citep{huang-etal-2022-unifying}, another distillation-based training strategy which achieves SOTA performance on TED-8-Diverse and WMT-6 dataset via alleviating the convergence inconsistency problem of MNMT using self-distillation. 
More details of baselines are demonstrated in Appendix~\ref{subsection:details about re-implementing baselines}.

\subsection{Main Results} 
We summarize the main results in Table~\ref{tab:main results}. As we observed, our methods significantly outperform the temperature-based sampling under M2O and O2M settings on both datasets. 
The model-2 trained with \textsc{Auto}-PMD has improved by up to +2.46 BLEU under the M2O setting of WMT-6.
Furthermore, Pareto-MD achieves higher BLEU scores than previous methods in most settings.
At best, \textsc{Auto}-PMD outperforms the previous SOTA (LSSD) by +1.22 BLEU scores under the M2O setting of WMT-6.
When comparing \textsc{Uni}-PMD and \textsc{Bi}-PMD, it is obvious that \textsc{Bi}-PMD consistently exceeds \textsc{Uni}-PMD, verifying the motivation that the worse model is also possible to improve the better model via knowledge distillation.
\textsc{Auto}-PMD further surpasses \textsc{Bi}-PMD by +0.3$\sim$0.5 BLEU. This improvement proves that our automatic search of distillation weights is indeed reliable. Moreover, \textsc{Auto}-PMD is more general than \textsc{Uni}-PMD and \textsc{Bi}-PMD since it eliminates the need to search for the hyper-parameter $\alpha$ manually\footnote{The effect of $\alpha$ is shown in Appendix~\ref{section:effect of differrent alpha}.}.

\begin{figure}[t]
  \centering
    \includegraphics[clip,width=1.0\columnwidth,]{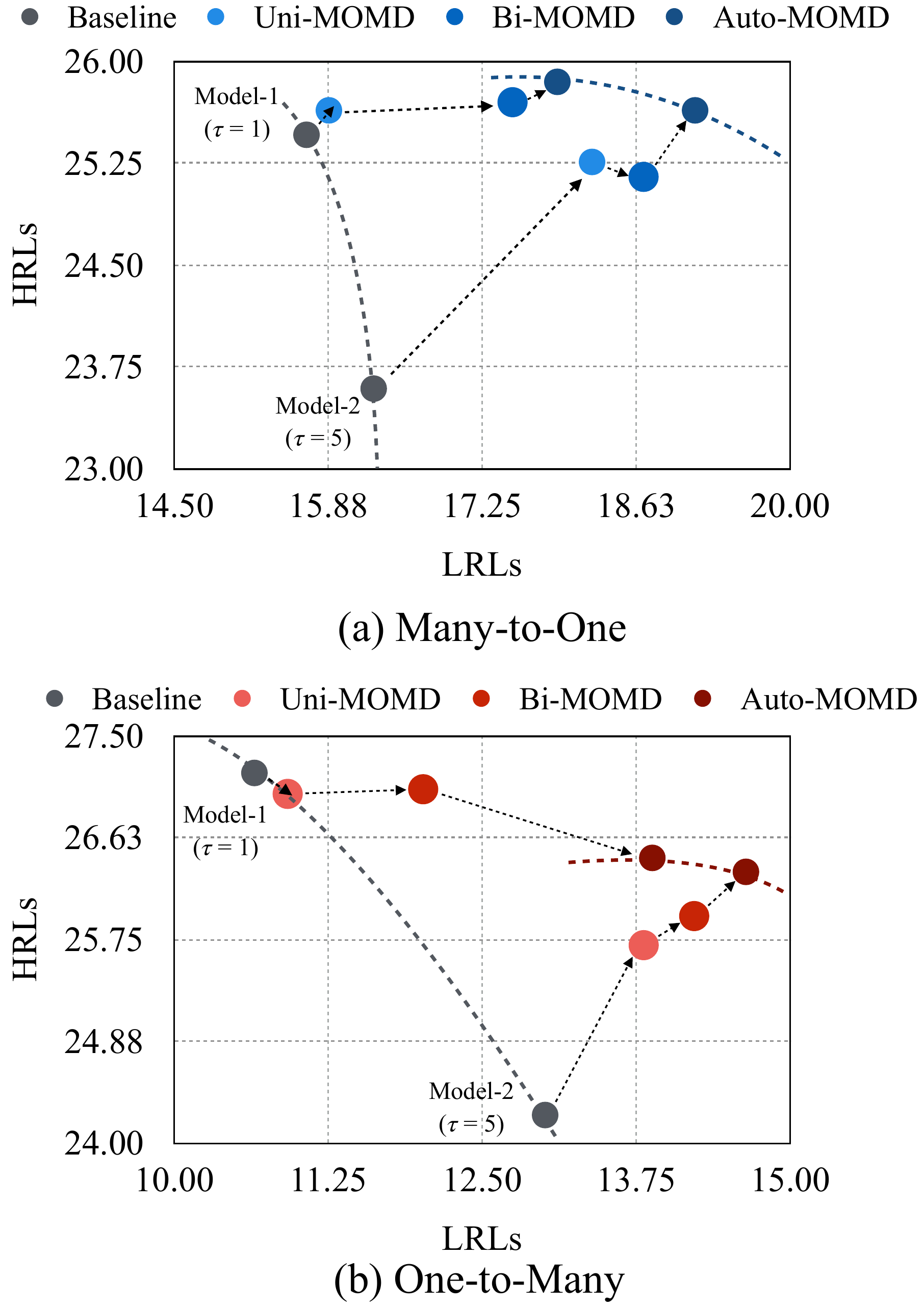}
    \caption{Multilingual performance Pareto frontier on the WMT-6 dataset. Gray dotted curves indicate the Pareto frontier of baselines and the colorful ones mark the frontier made by \textsc{Auto}-PMD. This figure shows that the Pareto frontier is pushed outwards significantly.}
  \label{fig:visualization of multilingual performance Pareto Frontier}
\end{figure}
\begin{table}[ht]
\renewcommand\tabcolsep{20.0pt} 
\small
\centering
\begin{tabular}{lcc}
\toprule
\textbf{Method} & \textbf{Sampling} & \textbf{BLEU} \\
\midrule
\multirow{2}{*}{Vanilla MD} & $\tau=1$ & 20.93 \\
  & $\tau=1$ & 20.97  \\
  \midrule
  \multirow{2}{*}{Vanilla MD} & $\tau=5$ & 21.13  \\
  & $\tau=5$ & 21.29   \\
\midrule

\multirow{2}{*}{\textsc{Bi}-PMD} & $\tau=1$ & 21.61  \\
  & $\tau=5$ & 21.92   \\
  \midrule
  \multirow{2}{*}{\textsc{Auto}-PMD} & $\tau=1$ & 21.89  \\
  & $\tau=5$ & \textbf{22.39}   \\
\bottomrule
\end{tabular}
\caption{Comparison between our method with vanilla mutual distillation (Vanilla MD) under the Many-to-One setting of the WMT-6 dataset.}
\label{table:comparison with vanilla MD}
\end{table}
\section{Analysis}
\subsection{Visualization of Pareto Frontier}
In order to clearly assess the impact of our methods on HRLs and LRLs, we visualize the Pareto frontier in Fig.~\ref{fig:visualization of multilingual performance Pareto Frontier}.
Three important observations can be drawn: 
1) overall, the model-1 has been significantly shifted right, and the model-2 has been shifted upwards, proving that Pareto-MD effectively alleviates the shortcomings of each model as we expected; 
2) both of model-1 and model-2 are shifted right beyond the original model-2, indicating that the performance of LRLs is improved beyond the original performance bound.
The reason may be that the transfer learning from HRLs to LRLs is more effective when the model achieves high performance on both HRLs and LRLs;
3) the model-1 degenerates on the translation of HRLs in the O2M setting. One potential cause is the representation space of HRLs undergoing more intense squeezing in the O2M than in the M2O when the model learns well on LRLs.
\subsection{Effect of Diverse Sampling Strategies}
In the Pareto-MD training framework, two models corresponding to different Pareto optimal solutions are trained collaboratively using distinct training distributions. One natural question that arises is, how would the performance be affected if we trained two models with the same training distribution? This approach, in fact, degenerates into the vanilla mutual distillation method. 
Therefore, we conduct a comparison experiment on the WMT-6 dataset (M2O setting) shown in Table~\ref{table:comparison with vanilla MD}.
The results indicate that vanilla mutual distillation underperforms our \textsc{Bi}-PMD by about 0.6 BLEU, which supports the effectiveness of using different sampling distributions for our Pareto-MD.
Moreover, we propose \textsc{Auto}-PMD to improve vanilla mutual distillation by +1.1 BLEU totally.

\begin{figure}[t]
  \centering
    \includegraphics[clip,width=1.0\columnwidth,]{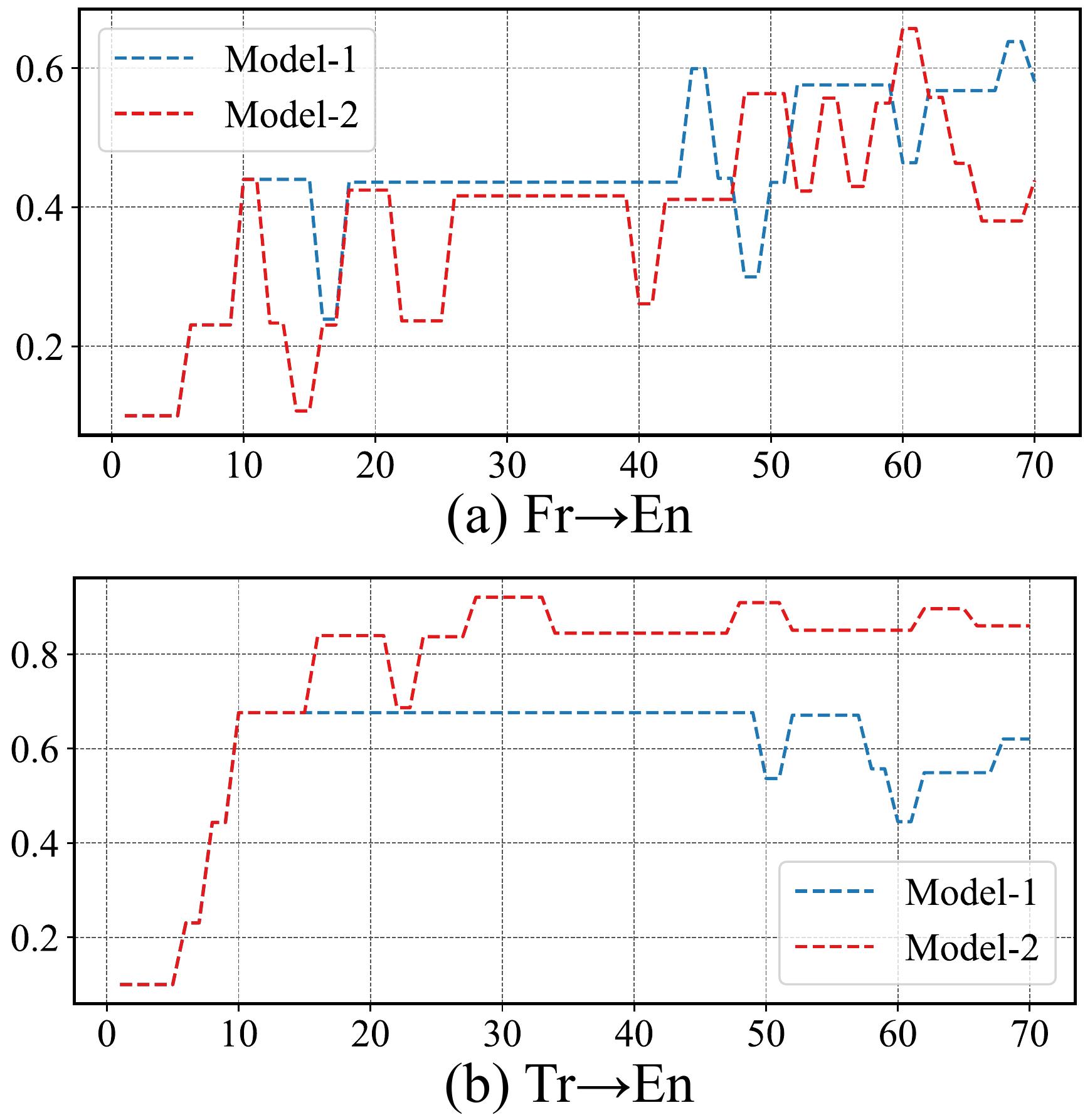}
    \caption{Visualization of automatically search distillation weights in the many-to-one setting of WMT-6 dataset. Due to the space limitation, we only show the weights of one HRL (Fr$\rightarrow$En) and one LRL (Tr$\rightarrow$En)}
  \label{fig:visualization of searched distillation weights}
\end{figure}
\subsection{Evolution of Distillation Weights}
To better understand the process of \textsc{Auto}-PMD, we visualize the automatically searched distillation weights in Fig.~\ref{fig:visualization of searched distillation weights}.
As it depicts, the distillation weights constantly vary to adapt the dynamic models with a decreasing variance made by the decay of search step size (\myref{eq:step size decay}).
Besides, it is discovered that the low-resource Tr$\rightarrow$En translation favors a higher value of distillation weight than the high-resource Fr$\rightarrow$En translation.
This phenomenon makes sense since LRLs suffer from more serious over-fitting~\citep{huang-etal-2022-unifying}, requiring stronger distillation learning.

\subsection{Effect of Step Size Scheduler $\mu$}
\label{section:effect of step size scheduler}
The performance of different step size schedulers is listed in Table~\ref{tab:effect of step size shceduler.}. The simple scheduler-1 fixes the step size to $1.0$, performing relatively poorly.
The scheduler-2 decreases the step size from $1.0$ to $0.2$.
The scheduler-4 decreases the step size from $1.0$ to $0.0$, achieving the best performance.
The scheduler-3 also decrease the step size from $1.0$ to $0.0$, while not performing searching of distillation weights at the end of training.
We finally adopt the scheduler-4 in our \textsc{Auto}-PMD.

\section{Related Work}
\begin{table}[t]

\renewcommand\tabcolsep{4.0pt}
\renewcommand\arraystretch{2}
\centering
\begin{tabular}{llc}
\toprule
\textbf{\ \#} & \makecell[c]{\textbf{Scheduler}} & \makecell[c]{\textbf{BLEU} \\ ($\tau=1$ / $\tau=5$)} \\
\midrule
\ 1 & \ $\mu = 1$ & 20.71 / 21.80 \\
\midrule
\ 2 & \ $\mu = \sqrt{\frac{\mathcal{T}_{max} - 0.8 \times t}{\mathcal{T}_{max}}}$ & 21.90 / 22.21 \\
\midrule
\ 3 & \ $\mu = max(\sqrt{\frac{\mathcal{T}_{max} - 1.2 \times t}{\mathcal{T}_{max}}}, 0)$ & 21.74 / 22.31 \\
\midrule
\ 4 & \ $\mu = \sqrt{\frac{\mathcal{T}_{max} - t}{\mathcal{T}_{max}}}$ & 21.89 / \textbf{22.39} \\
\bottomrule
\end{tabular}
\caption{Effect of step size scheduler $\mu$ in the many-to-one translation of WMT-6 dataset.
We have tried for four implementations of the step size scheduler.}
\label{tab:effect of step size shceduler.}
\end{table}
For a long time, data imbalance has been a problem hindering multilingual models from performing evenly across different languages.
Existing methods pursue balanced performance via designing heuristics~\citep{arivazhagan-etal-2019-massively} or automatic sampling strategies~\citep{arivazhagan-etal-2019-massively,wang-etal-2020-balancing,zhou-etal-2021-distributionally,wu-etal-2021-uncertainty,zhang-etal-2021-competence-based}. For example, \citet{wang-etal-2020-balancing} design a Reinforce Learning based method to automatically adjust the sampling probability of each language pair towards an overall optimal solution.
\citet{zhou-etal-2021-distributionally} vary the distribution via distributional robust optimization.
However, their improvement is limited since increasing the training weights of some languages leads to relative decreases in the weights of other languages, resulting in a trade-off on the Pareto frontier.
Different from their methods, we overcome this issue by training two models collaboratively.

Before our work, there were two approaches also based on knowledge distillation in MNMT.
 \citet{tan-etal-2018-multilingual} use pre-defined bilingual models to teach the multilingual model via knowledge distillation.
\citet{huang-etal-2022-unifying} propose language-specific self-distillation to remedy the convergence inconsistency problem in MNMT using self-distillation.
Our Pareto-MD is an extension of mutual distillation on the Pareto optimization problems.
\section{Conclusion}
In this work, we propose a training framework Pareto-MD to reach a higher Pareto frontier for MNMT.
The core of Pareto-MD is the synergy between diverse Pareto optimal solutions via mutual distillation.
Besides, we design a novel strategy for deducing distillation weights automatically, achieving better performance and getting rid of hyper-parameter searching.
Experimental results on the WMT and TED datasets show the effectiveness of our method.
Even though we experiment with training two models in this work, our method can naturally apply to train more models.
In the future, we are interested in exploring how to apply our Pareto-MD to the training of large language models~\citep{zhao2023survey}.

\section*{Limitations}
Our Pareto-MD doubles computational cost due to training two models simultaneously, which can be a limitation of our approach.
However, Pareto-MD obtains significant improvement that is hard to achieve for previous methods of training individual models, thus worthy.
Besides, our approach would not necessarily result in double training time because these two models can be trained in parallel as implemented by~\citet{guo2020online}.
Moreover, Pareto-MD does not affect inference efficiency.

\section*{Acknowledgements}
Xiaocheng Feng is the corresponding author of this work. We thank the anonymous reviewers for their insightful comments. This work was supported by the National Key R\&D Program of China via grant 2020AAA0106502, National Natural Science Foundation of China (NSFC) via grant 62276078, the Key R\&D Program of Heilongjiang via grant 2022ZX01A32 and the International Cooperation Project of PCL, PCL2022D01.

\bibliographystyle{acl_natbib}
\bibliography{anthology,custom}

\clearpage
\appendix
\onecolumn
\section{Equivalence Between Searching in $O^k$ and $\widetilde{O}^k$}
\label{section:proof of equivalence}
As illustrated in \S\ref{subsection:auto-momd}, our strategy \textsc{Auto}-PMD first searches the language-specific optimal multilingual distillation weight vector $\hat{\boldsymbol{\alpha}}^\ell$ for each translation direction $\ell$ from a search space and then take the $\hat{\boldsymbol{\alpha}}^\ell[\ell]$ as the searching result of $\boldsymbol{\alpha}^k[\ell]$.
To search the optimal solution, the search space should be the entire space $O^k$, which is formalized as:
\begin{equation}
\nonumber
O^k = \mathop{\bigtimes}\limits_{\ell \in L} \{ f(\boldsymbol{\alpha}^{k-1}[\ell]) \,|\, f \in \mathcal{F} \},
\end{equation}
However, the size of $O^k$ grows exponentially as the number of languages increases. 
Therefore, we instead search in $\widetilde{O}^k$, a subset of $O^k$, which is formalized as:
\begin{equation}
\begin{aligned}
\nonumber
\widetilde{O}^k =
\{\  
&\{\, f_{\uparrow}(\boldsymbol{\alpha}^{k-1}[\ell]) \,\}_{\ell \in L}, \\
&\{\, f_{\downarrow}(\boldsymbol{\alpha}^{k-1}[\ell]) \,\}_{\ell \in L}, \\
&\{\, f_{=}(\boldsymbol{\alpha}^{k-1}[\ell]) \,\}_{\ell \in L} 
\  \}
.
\end{aligned}
\end{equation}
In this section, we initially give a formal definition of the searching process. Subsequently, the Distillation Weights Independence (DWI) assumption is introduced. Ultimately, we prove the equivalence between searching in $O^k$ and $\widetilde{O}^k$ based on the DWI assumption.
\begin{definition}[Searching Process]
    Given the multilingual trial set $D^{trial} = \{ D^{trial}_{\ell} \}_{\ell=1}^{|L|}$, validation set $D^{valid} = \{ D^{valid}_{\ell} \}_{\ell=1}^{|L|}$, student mode $\theta^S$, teacher model $\theta^T$, and the search space $O$ , for each translation direction $\ell$, the searching process of $\boldsymbol{\alpha}^k[\ell]$ is:
    \begin{equation}
    \begin{aligned}
    \nonumber
        \boldsymbol{\alpha}^k[\ell] &= \hat{\boldsymbol{\alpha}}^\ell[\ell] \\
        \hat{\boldsymbol{\alpha}}^\ell \ \  &= \mathop{\arg\min}\limits_{\boldsymbol{\alpha} \in O} \mathcal{L}_{ce}(D^{valid}_{\ell}; \hat{\theta}(\boldsymbol{\alpha})) \\
        \hat{\theta}(\boldsymbol{\alpha}) \  &= \mathop{\arg\min}\limits_{\theta}\mathcal{L}_{PMD}(D^{trial}; \theta^S, \theta^T, \boldsymbol{\alpha}).
    \end{aligned}
    \end{equation}
\end{definition}

\begin{hypothesis}[Distillation Weights Independence]
\label{hyp:Distillation Weights Independence}
Given two multilingual distillation weight vectors $\boldsymbol{\alpha}_1$ and $\boldsymbol{\alpha}_2$: 
\begin{equation}
\begin{aligned}
    \nonumber
\exists \ell \in L, \boldsymbol{\alpha}_1[\ell] &= \boldsymbol{\alpha}_2[\ell] \\
\Rightarrow
    \mathcal{L}_{ce}(D^{valid}_{\ell}; \hat{\theta}(\boldsymbol{\alpha}_1)) &= \mathcal{L}_{ce}(D^{valid}_{\ell}; \hat{\theta}(\boldsymbol{\alpha}_2))
\end{aligned}
\end{equation}
\end{hypothesis}

\begin{theorem}

    Let $\hat{\boldsymbol{\alpha}}^\ell[\ell]$ denote the searching result in the search space $O^k$ for direction $\ell$, $\widetilde{\boldsymbol{\alpha}}^\ell[\ell]$ denotes the searching result in the search space $\widetilde{O}^k$ for direction $\ell$, based on the Distillation Weights Independence assumption, it is satisfied that:
    \begin{equation}
    \nonumber
        \hat{\boldsymbol{\alpha}}^\ell[\ell] = \widetilde{\boldsymbol{\alpha}}^\ell[\ell].
    \end{equation}    
\end{theorem}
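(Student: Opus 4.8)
The plan is to exploit the Distillation Weights Independence (DWI) assumption to collapse the per-direction validation objective into a function of a single coordinate, and then observe that both search spaces realize exactly the same set of candidate values for that coordinate, so the optimization over the exponentially large $O^k$ and the optimization over the three-element $\widetilde{O}^k$ become literally the same one-dimensional problem.

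First I would fix a direction $\ell$ and build a reduced objective. By the DWI assumption, whenever two weight vectors agree in their $\ell$-th entry they yield the same value of $\mathcal{L}_{ce}(D^{valid}_{\ell}; \hat{\theta}(\cdot))$; this is precisely the statement that the map $\boldsymbol{\alpha} \mapsto \mathcal{L}_{ce}(D^{valid}_{\ell}; \hat{\theta}(\boldsymbol{\alpha}))$ factors through the projection $\boldsymbol{\alpha} \mapsto \boldsymbol{\alpha}[\ell]$. I would therefore introduce a well-defined function $g_\ell$ on the three-element candidate set $\mathcal{C}_\ell := \{ f(\boldsymbol{\alpha}^{k-1}[\ell]) : f \in \mathcal{F}\}$ satisfying $\mathcal{L}_{ce}(D^{valid}_{\ell}; \hat{\theta}(\boldsymbol{\alpha})) = g_\ell(\boldsymbol{\alpha}[\ell])$ for every $\boldsymbol{\alpha} \in O^k$. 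The searching process then reduces the minimization over the vector space to a minimization of $g_\ell$ over $\mathcal{C}_\ell$, so that $\hat{\boldsymbol{\alpha}}^\ell[\ell]$ is exactly an element of $\mathcal{C}_\ell$ attaining $\min_{v \in \mathcal{C}_\ell} g_\ell(v)$.

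Next I would compare the two projections onto coordinate $\ell$. Every vector in $O^k$ has its $\ell$-th entry in $\mathcal{C}_\ell$, and each value of $\mathcal{C}_\ell$ is attained by some vector of $O^k$, so $\mathop{\arg\min}_{\boldsymbol{\alpha} \in O^k} g_\ell(\boldsymbol{\alpha}[\ell])$ selects the best element of $\mathcal{C}_\ell$. The crucial observation is that $\widetilde{O}^k$ already attains all of $\mathcal{C}_\ell$ in coordinate $\ell$: the ``increase-all'', ``decrease-all'' and ``keep-all'' vectors have $\ell$-th entries $f_{\uparrow}(\boldsymbol{\alpha}^{k-1}[\ell])$, $f_{\downarrow}(\boldsymbol{\alpha}^{k-1}[\ell])$ and $f_{=}(\boldsymbol{\alpha}^{k-1}[\ell])$ respectively, which together exhaust $\mathcal{C}_\ell$. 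Hence minimizing $g_\ell$ over the projection of $\widetilde{O}^k$ is the identical optimization problem as minimizing $g_\ell$ over the projection of $O^k$, so $\widetilde{\boldsymbol{\alpha}}^\ell[\ell]$ and $\hat{\boldsymbol{\alpha}}^\ell[\ell]$ are both minimizers of $g_\ell$ over $\mathcal{C}_\ell$, yielding the claimed equality.

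The main obstacle I anticipate is not the counting argument but the well-definedness of $g_\ell$ together with the treatment of ties in the $\mathop{\arg\min}$. Establishing that the objective genuinely factors through the $\ell$-th coordinate is exactly the place where DWI is indispensable, and it must be stated carefully, since without it the two searches could in principle disagree on directions other than $\ell$ and thereby on $\boldsymbol{\alpha}[\ell]$. For uniqueness of the searched value I would either assume the minimizer of $g_\ell$ over $\mathcal{C}_\ell$ is unique (the generic case, where distinct distillation weights induce distinct validation losses) or fix a deterministic tie-breaking rule; because the candidate set $\mathcal{C}_\ell$ is identical for both search spaces, any consistent rule returns the same coordinate, so the equivalence is robust to ties.
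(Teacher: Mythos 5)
Your proof is correct and is essentially the paper's own argument: both use the DWI assumption to reduce the direction-$\ell$ validation objective to a function of the single coordinate $\boldsymbol{\alpha}[\ell]$, and both hinge on the observation that the three uniform-action vectors of $\widetilde{O}^k$ realize every candidate value $\{f(\boldsymbol{\alpha}^{k-1}[\ell]) \,|\, f \in \mathcal{F}\}$ in coordinate $\ell$ --- the paper phrases this by exhibiting the witness $\{\hat{f}^{\ell}(\boldsymbol{\alpha}^{k-1}[\ell'])\}_{\ell' \in L} \in \widetilde{O}^k$ whose loss matches the $O^k$-optimum and invoking $\widetilde{O}^k \subseteq O^k$, whereas you phrase it as both searches being the identical one-dimensional minimization of $g_\ell$ over $\mathcal{C}_\ell$. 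Your explicit handling of ties in the $\arg\min$ is a point the paper silently glosses over (its step promoting ``achieves the minimum over $\widetilde{O}^k$'' to ``equals the $\arg\min$'' tacitly assumes uniqueness), but this is a refinement of the same argument rather than a different route.
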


\begin{proof}
    Let $\hat{\boldsymbol{\alpha}}^\ell[\ell] =  \hat{f}^l(\boldsymbol{\alpha}^{k-1}[\ell])$, where $\hat{f}^l \in \mathcal{F}$ is the language-specific action, the following equation holds:
    \begin{equation}
        \nonumber
        \mathcal{L}_{ce}(D^{valid}_{\ell}; \theta(\hat{\boldsymbol{\alpha}}^\ell)) = \mathcal{L}_{ce}(D^{valid}_{\ell}; \theta(\{ \hat{f}^l(\boldsymbol{\alpha}^{k-1}[\ell']) \}_{\ell' \in L})),
    \end{equation}
    based on hypothesis~\ref{hyp:Distillation Weights Independence}.
    Because $\{ \hat{f}^l(\boldsymbol{\alpha}^{k-1}[\ell']) \}_{\ell' \in L} \in \widetilde{O}^k$, and $\widetilde{O}^k \subseteq O^k$, then we can infer that:
    \begin{equation}
    \nonumber
    \begin{aligned}
        &\Rightarrow&
        \mathcal{L}_{ce}(D^{valid}_{\ell}; \{ \hat{f}^l(\boldsymbol{\alpha}^{k-1}[\ell']) \}_{\ell' \in L}) &= \min\limits_{\boldsymbol{\alpha} \in \widetilde{O}^k} \mathcal{L}_{ce}(D^{valid}_{\ell}; \hat{\theta}(\boldsymbol{\alpha})) \\
        &\Rightarrow&
        \{ \hat{f}^l(\boldsymbol{\alpha}^{k-1}[\ell']) \}_{\ell' \in L} &= \mathop{\arg\min}\limits_{\boldsymbol{\alpha} \in \widetilde{O}^k} \mathcal{L}_{ce}(D^{valid}_{\ell}; \hat{\theta}(\boldsymbol{\alpha})) \\
        &\Rightarrow&
        \hat{f}^l(\boldsymbol{\alpha}^{k-1}[\ell]) &= \widetilde{\boldsymbol{\alpha}}^\ell[\ell] \\
        &\Rightarrow&
        \hat{\boldsymbol{\alpha}}^\ell[\ell] &= \widetilde{\boldsymbol{\alpha}}^\ell[\ell]
    \end{aligned}
    \end{equation}
\end{proof}

\twocolumn
\section{Data Statistics}
\label{section:data statistics}
We list data statistic of TED-8-Diverse dataset in Table~\ref{tab:data statistics.}.
Data statistics of WMT-6 dataset is listed in Table~\ref{tab:data statistics WMT.}.
\begin{table}[th]
\centering

\begin{tabular}{lr}
\toprule
\textbf{Language} & \textbf{Num} \\
\midrule
\textit{bos} (Bosnian) & 5,664  \\

\textit{mar} (Marathi) & 9,840  \\

\textit{hin} (Hindi) & 18,798  \\

\textit{mkd} (Macedonian) & 25,335 \\

\textit{ell} (Greek) & 134,327 \\

\textit{bul} (Bulgarian) & 174,444  \\

\textit{fra} (French) & 192,304  \\

\textit{kor} (Korean) & 205,640 \\
\bottomrule
\end{tabular}
\caption{Data statistics for the TED-8-Diverse dataset. `num' refers to the number of sentence pairs in the training set.}
\label{tab:data statistics.}
\end{table}

\begin{table}[th]

\renewcommand\tabcolsep{3.0pt}
\centering

\begin{tabular}{l|cr}
\toprule
\textbf{Language} & \textbf{Data Source} & \textbf{Num} \\
\midrule

\textit{tr} (Turkish) & WMT17 & 5,000 \\

\textit{ro} (Romanian) & WMT16 & 10,000 \\

\textit{et} (Estonian) & WMT18 & 80,000 \\

\textit{zh} (Chinese) & WMT17 & 400,000 \\

\textit{de} (German) & WMT14 & 1,500,000 \\

\textit{fr} (French) & WMT14 & 3,000,000 \\

\bottomrule
\end{tabular}
\caption{Data statistics for the WMT dataset. `num' refers to the number of sentence pairs in the training set.}
\label{tab:data statistics WMT.}
\end{table}

\begin{table*}[t]
\renewcommand\tabcolsep{4.0pt}
\linespread{1.5}
\small
    \centering
    \begin{tabular}{c | c  c  cccccc c}
    \toprule
        
        \textbf{Setting} & \textbf{Method} & \textbf{Sampling} & \textbf{fr} & \textbf{de} & \textbf{zh} & \textbf{et} & \textbf{ro} & \textbf{tr} & \textbf{\textit{Avg.}} \\
        \cline{1-10}
    
        \multirow{4}{*}{M2O} &
        \multirow{2}{*}{\textit{Temperature Sampling}} & $\tau=1$ & 34.40 & 28.70 & 13.27 & 16.41 & 22.65 & 7.99 & 20.57 \\
        & & $\tau>1$ & 31.59 & 26.61 & 12.56 & 16.48 & 23.06 & 9.29 & 19.93 \\

        \cdashline{2-10}

        & \multirow{2}{*}{\textsc{Auto}-PMD} & $\tau=1$ & \textbf{34.96} & \textbf{28.79} & 13.81 & 17.9 & 25.22 & 10.65 & 21.89 \\
        & & $\tau>1$ & 34.09 & 28.77 & \textbf{14.05} & \textbf{19.22} & \textbf{26.62} & \textbf{11.60} & \textbf{22.39} \\
    
        \cline{1-10}
        \multirow{4}{*}{O2M} &
        \multirow{2}{*}{\textit{Temperature Sampling}} & $\tau=1$ & \textbf{36.16} & \textbf{23.89} & 21.49 & 11.53 & 14.85 & 5.58 & 18.92  \\
        & & $\tau>1$ & 31.21 & 20.76 & 20.76 & 13.28 & 17.54 & 8.20 & 18.63  \\
        \cdashline{2-10}

        & \multirow{2}{*}{\textsc{Auto}-PMD} & $\tau=1$ & 35.38 & 23.12 & 20.84 & 13.2 & 18.79 & 9.65 & 20.16  \\
        & & $\tau>1$ & 34.47 & 23.00 & \textbf{21.51} & \textbf{14.15} & \textbf{19.54} & \textbf{10.23}  & \textbf{20.48}  \\
    \bottomrule
    \end{tabular}
    \caption{BLEU score per language pair on the WMT-6 dataset.
    `\textit{Avg.}' is the abbreviation of ``average values''.
    Bold indicates the best performance of each language pair. 
    Languages are sorted in decreasing order from left to right according to data size.}
    \label{tab:results per languages WMT-6}
    \vspace{-3mm}
\end{table*}
\begin{table*}[t]
\renewcommand\tabcolsep{4.0pt}
\linespread{1.5}
\small
    \centering
    \begin{tabular}{c | c  c  cccccccc c}
    \toprule
        \textbf{Setting} & \textbf{Method} & \textbf{Sampling} & \textbf{kor} & \textbf{fra} & \textbf{bul} & \textbf{ell} & \textbf{mkd} & \textbf{hin} & \textbf{mar}   & \textbf{bos} & \textbf{\textit{Avg.}} \\
        \cline{1-12}
    
        \multirow{4}{*}{M2O} &
        \multirow{2}{*}{\textit{Temperature Sampling}} & $\tau=1$ & 19.73 & 40.73 & 39.74 & 38.71 & 34.34 & 23.38 & 11.13 & 24.88 & 29.08 \\
        & & $\tau>1$ & 18.79 & 40.1 & 39.00 & 38.11 & 32.89 & 22.55 & 10.36 & 24.98 & 28.35 \\

        \cdashline{2-12}

        & \multirow{2}{*}{\textsc{Auto}-PMD} & $\tau=1$ & \textbf{21.14} & \textbf{42.41} & \textbf{41.52} & \textbf{40.67} & \textbf{36.49} & \textbf{25.9} & 12.32 & 27.94 & \textbf{31.05} \\
        & & $\tau>1$ & 20.51 & 42.03 & 40.93 & 40.00 & 36.04 & 25.71 & \textbf{12.44} & \textbf{28.02} & 30.71 \\
    
        \cline{1-12}
        \multirow{4}{*}{O2M} &
        \multirow{2}{*}{\textit{Temperature Sampling}} & $\tau=1$ & 9.06 & 40.26 & 36.10 & 33.63 & 25.67 & 15.56 & 4.90 & 16.82 & 22.75 \\
        & & $\tau>1$ & 8.87 & 39.96 & 35.91 & 33.31 & 24.35 & 14.81 & 4.75 & 15.87 & 22.23 \\
        \cdashline{2-12}

        & \multirow{2}{*}{\textsc{Auto}-PMD} & $\tau=1$ & \textbf{9.13} & \textbf{40.94} & \textbf{36.56} & \textbf{34.03} & 27.15 & 15.89 & \textbf{5.13} & 17.64 & \textbf{23.31} \\
        & & $\tau>1$ & 8.90 & 40.65 & 36.55 & 33.64 & \textbf{27.44} & \textbf{16.29} & 4.90 & \textbf{17.89} & 23.28 \\
    \bottomrule
    \end{tabular}
    \caption{BLEU score per language pair on the TED-8-\textsc{Diverse} dataset.
    `\textit{Avg.}' is the abbreviation of ``average values''.
    Bold indicates the best performance of each language pair. 
    Languages are sorted in decreasing order from left to right according to data size.}
    \label{tab:results per languages ted-diverse}
    \vspace{-3mm}
\end{table*}

\section{\label{subsection:hyper-parameters}Hyper-parameters}
In this section, we report the hyper-parameters used in our experiments.
\begin{itemize}
    \item We adopt the base-version of Transformer architecture with 6 layers encoders/decoders and 8 attention heads.
    \item The embedding dimension is 512 and the Feed-Forward Network has a dimension of 2048.
    \item We train models with learning rate $\eta=0.0015$ and use Adam optimizer~\citep{DBLP:journals/corr/KingmaB14} with $\beta_1=0.9,\beta_2=0.98$, and the same learning rate schedule as \newcite{vaswani-etal-2017-attention}.
    \item Batch size is set to 64K and half-precision training is adopted~\cite{ott-etal-2018-scaling}. 
    \item For regularization, we use the label smoothing as 0.1~\cite{szegedy2016rethinking}. 
    We set the dropout as 0.3~\cite{JMLR:v15:srivastava14a} on the TED-8-Diverse dataset and as 0.2 on the WMT-6 dataset.
    \item Models are trained for 70 epochs on WMT-6 and 300 epochs on TED-8-Diverse according to the convergence.
    \item For TED-8-Diverse, we preprocess sentececes using sentencepiece~\citep{kudo-richardson-2018-sentencepiece} with a vocabulary size of 8\textit{K} for each language. For WMT-6, the vocabulary size is 64\textit{K} for all languages.
    \item For inference, we use beam search with beam size 5.
\end{itemize}
All models are trained on Tesla V100 GPUs.

\begin{figure}[t]
  \centering
    \includegraphics[clip,width=1.0\columnwidth,]{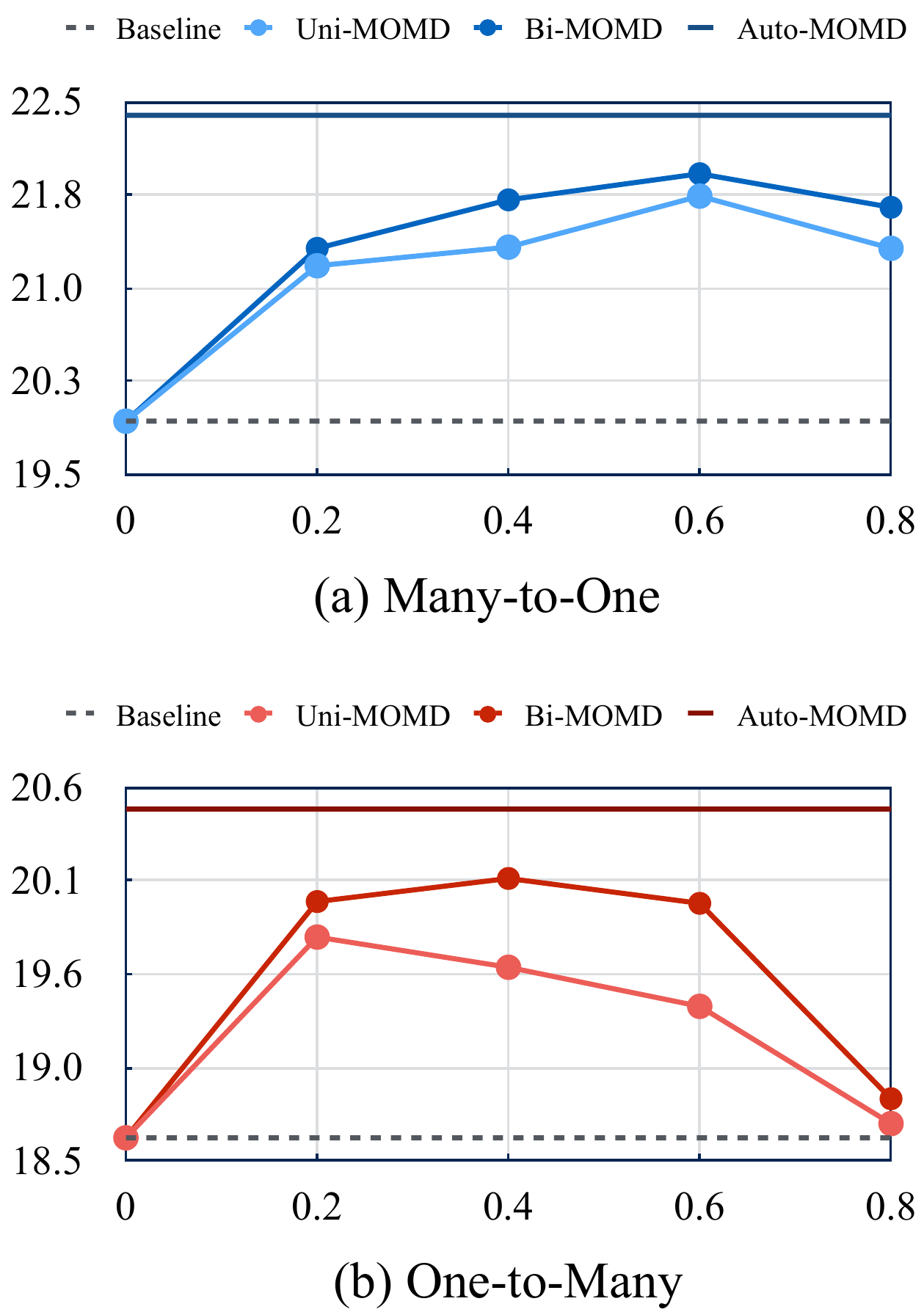}
    \caption{Effect of different values of $\alpha$ on WMT-6 dataset. For clarity, we only depict the results of model-2 trained with $\tau=5$.}
  \label{fig:effect of different alpha}
\end{figure}
\section{Details about Baselines \label{subsection:details about re-implementing baselines}}
For temperature-based sampling~\citep{arivazhagan-etal-2019-massively}, we adopt the official implementation in fairseq.
LSSD is re-implemented successfully with the code released by~\citet{huang-etal-2022-unifying}. We have tried to set Dropout rate to $\{0.2, 0.3\}$ for LSSD, and report the best results in terms of BLEU for fair comparison.
The code of $\chi$-IBR~\cite{zhou-etal-2021-distributionally} is also released. However, the result of $\chi$-IBR evaluated in our experiments is lower than the original paper. Therefore, we report the results in the original paper.

\section{BLEU scores on Individual Languages}
In this section, we report the BLEU scores of individual language pairs. For clarity, we only show the results of the temperature-based sampling and our \textsc{Auto}-PMD. As illustrated in Table.~\ref{tab:results per languages WMT-6} and Table.~\ref{tab:results per languages ted-diverse}, our method achieves consistent improvements in 3 out of 4 settings.

In the one-to-many setting of WMT-6 dataset, the performance of HRLs (i.e., \textit{fr} and \textit{de}) drops about 0.7 BLEU. This may be due to the parameter interference from the significantly improved LRLs.

\section{Effect of $\alpha$ for \textsc{Uni}-PMD and \textsc{Bi}-PMD}
\label{section:effect of differrent alpha}
In this section, we show the experimental results of \textsc{Uni}-PMD and \textsc{Bi}-PMD with different values of $\alpha$ in Fig.~\ref{fig:effect of different alpha}. 
As demonstrated, the value of $\alpha$ is crucial for the performance.
The optimal value of $\alpha$ varies across different settings.
This conclusion is consistent with former work related to knowledge distillation~\citep{huang-etal-2022-unifying}, which highlights the importance of deducing distillation weights automatically.

\section{Other Variants of Mutual Distillation}
\begin{table}[!t]
\renewcommand\tabcolsep{6.0pt}
\small
\centering
\begin{tabular}{lccc}
\toprule
\multirow{2}{*}{\textbf{Method}} & \multirow{2}{*}{\makecell[c]{\textbf{Sampling}}} & \multicolumn{2}{c}{\textbf{\textsc{BLEU}}}\\
\cmidrule{3-4}
& & \textbf{M2O} & \textbf{O2M} \\
\midrule

  \multirow{2}{*}{\textsc{Auto}-PMD} & $\tau=1$ & 21.89  & 20.16 \\
  & $\tau=5$ & \textbf{22.39} & \textbf{20.48}  \\
\midrule
  \multirow{2}{*}{\makecell[l]{\textsc{Dynamic}-MD }} & $\tau=1$ & 22.06   & 20.33  \\
  & $\tau=5$ & 22.11  & 20.24   \\
\midrule
  \multirow{2}{*}{\makecell[l]{LSMD}} & $\tau=1$ & 21.47   & 18.94   \\
  & $\tau=5$ & 21.03  & 19.46    \\
\bottomrule
\end{tabular}
\caption{Other variants of mutual distillation designed by us. 
\textsc{Dynamic}-MD is the abbreviation of Dynamic Mutual Distillation.
LSMD is the abbreviation of Language-Specific Mutual Distillation.}
\label{table:ablation study of Auto-PMD.}
\end{table}
In this work, we design another two mutual distillation-based strategies beyond \textsc{Auto}-PMD: Dynamic Mutual Distillation (\textsc{Dynamic}-MD) and Language-Specific Mutual Distillation (LSMD).
\textsc{Dynamic}-MD adopts the same update process of distillation weights as \textsc{Auto}-PMD. That is, \textsc{Dynamic}-MD also makes three trials and uses the optimal action to uptate the distillation weight. Differently, \textsc{Dynamic}-MD selects a uniform optimal action instead of language-specific optimal actions.
LSMD sets fixed and language-specific distillation weights for each language pair.
To obtain suitable language-specific distillation weights, we use the distillation weights searched by \textsc{Auto}-PMD at the last update.
The results of these two strategies are listed in Table~\ref{table:ablation study of Auto-PMD.}.
As the results show, \textsc{Auto}-PMD achieves higher performance upper-bound than these two strategies.














\end{document}